\theoremstyle{plain}
\newtheorem{theorem}{Theorem}[section]
\newtheorem{lemma}[theorem]{Lemma}
\theoremstyle{definition}
\theoremstyle{remark}
\newcommand{\name}{\textsc{JPO}\xspace}
\newcommand{\dpo}{\textsc{DPO}\xspace}
\newcommand\numberthis{\addtocounter{equation}{1}\tag{\theequation}}
\title{Comparing Bad Apples to Good Oranges:\\ Aligning Large Language Models via Joint Preference Optimization}
\author{Hritik Bansal$^{\heartsuit*}$ \ \ \ \ \
Ashima Suvarna$^{\heartsuit*}$ \ \ \ \ \
Gantavya Bhatt$^{\diamondsuit*}$
\\
{\bf Nanyun Peng$^{\heartsuit}$\ \ \ \ \
Kai-Wei Chang$^{\heartsuit}$ \ \ \ \ \
Aditya Grover$^{\heartsuit}$ \ \ \ \ \
} \\
 $^{\heartsuit}$University of California, Los Angeles
  $^{\diamondsuit}$University of Washington, Seattle \\
\texttt{\{hbansal,asuvarna31\}@cs.ucla.edu,\{gbhatt2\}@uw.edu} \\
  }
\begin{document}
\maketitle

\begin{abstract}
A common technique for aligning large language models (LLMs) relies on acquiring human preferences by comparing multiple generations conditioned on a fixed context. This method, however, relies solely on pairwise comparisons, where the generations are evaluated within an identical context.  While effective to such conditional preferences often fail to encompass the nuanced and multidimensional nature of human preferences. In this work, we revisit the traditional paradigm of preference acquisition and propose a new axis based on eliciting preferences jointly over the instruction-response pairs. Unlike prior preference optimizations, which are designed for conditional ranking protocols (e.g., DPO), we propose Joint Preference Optimization (\name{}), a new preference optimization objective that upweights the joint probability of the chosen instruction-response pair over the rejected instruction-response pair. Interestingly, LLMs trained with joint instruction-response preference data using \name{} outperform LLM trained with DPO by $5.2\%$ and $3.3\%$ win-rate for summarization and open-ended dialogue datasets, respectively. Our findings reveal that joint preferences over instruction and response pairs can significantly enhance the alignment of LLMs by tapping into a broader spectrum of human preference elicitation. The data and code is available at
\url{https://github.com/Hritikbansal/dove}.


\end{abstract}

\section{Introduction}
\label{sec:introduction}

Recently, alignment \citep{stiennon2020learning,ouyang2022training} has emerged as a crucial step in enhancing the performance of large language models (LLMs) \citep{TheC3, openai2023gpt4, team2023gemini, claude, brown2020language, touvron2023llama, jiang2023mistral} in diverse real-world applications \citep{alpaca_eval,zheng2023lmsys,wu2023bloomberggpt,clusmann2023future,lambert2024rewardbench}. In particular, aligned LLMs generate responses that maximize human utility along various dimensions such as helpfulness, coherence, and harmlessness \citep{askell2021general,ouyang2022training}. Here, the notion of human utility is subjective \cite{kirk2024prism, gabriel2020artificial}, and mainly hinges on \textit{how} preferences are acquired from annotators \cite{otto2022context}.
Among the various preference acquisition protocols \citep{lightman2023let,wu2023fine,scheurer2023training,bansal2023peering}, the ranking-based approach is the most widely used paradigm to align LLMs \citep{stiennon2020learning,ouyang2022training,bai2022training,tunstall2023zephyr,OpenHermes25}. Specifically, in ranking approach the annotator has to compare a pair of responses \textit{conditioned} on a fixed context. For instance, humans can select a `preferred' response by comparing a pair of responses for the instruction `Create a list of four fruits other than Apple' (Figure \ref{fig:main} (\textit{left})).

\begin{figure*}[t]
\begin{center}
\includegraphics[scale=0.6]{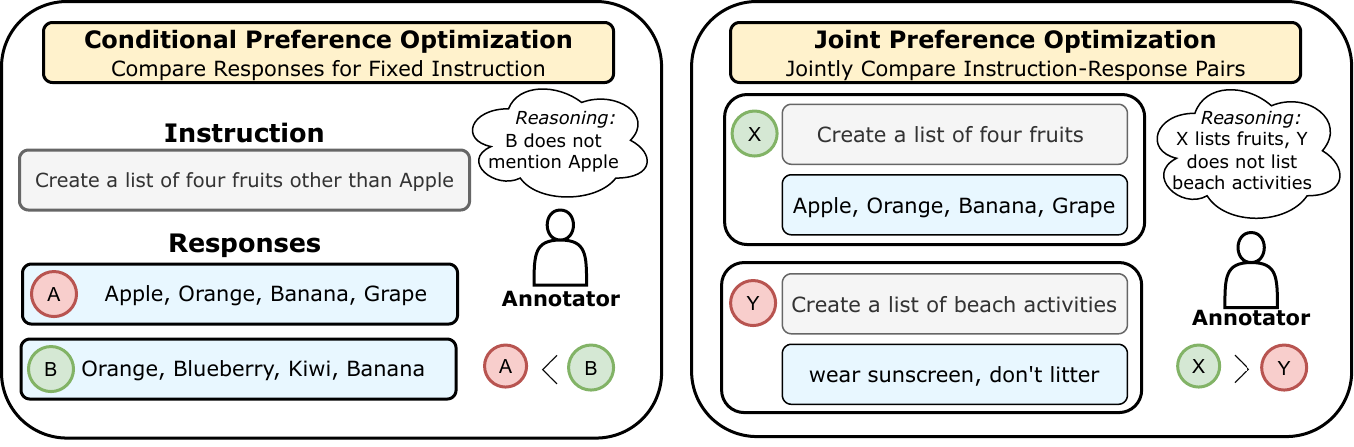}
\end{center}
\caption{\small{Overview of the Joint Preference Optimization. (\textit{Left}) We show that the conditional preference acquisition method would require the annotators to compare two responses for an identical instruction. (\textit{Right}) We show that the annotators can also assign rankings jointly over instruction-response pairs. Specifically, the annotator prefers a helpful response (e.g., Apple ... Grape) over a response that ignores the context of the instruction (e.g., wear sunscreen ... litter). Our framework thus elicits preferences that are obfuscated in the prior approach.}}
\label{fig:main}
\end{figure*}

Although traditional conditional rankings provide rich preferences for alignment, they fail to holistically capture the multi-faceted nature of  human decision-making and preferences \citep{zhi2024beyond, gigerenzer2008heuristics}. Besides ranking preferences conditioned on a fixed context, humans can also express preferences in non-identical contexts. For example, while browsing reviews for products on an e-commerce website, humans are likely to prefer an accurate and detail-oriented review for a camera over an incoherent, vague movie review even though the products (camera and movie) are qualitatively different. 


In this work, we revisit the traditional paradigm of conditional preference acquisition by developing a framework to acquire preferences jointly over instruction-response pairs. Starting from an instruction-response data consisting of response $R_i$ for instruction $I_i$ (say $i \in \{1,2\}$), we acquire ranking-based preferences over the instruction-response pairs $(I_1, R_1)$ and $(I_2, R_2)$. As shown in Figure \ref{fig:main} (\textit{right}), we aim to understand whether the response in the pair $X$ is perceived better than the response in the pair $Y$. We hypothesize that by capturing preferences in non-identical contexts our protocol can elicit human behaviors that are obfuscated in prior protocols. First, wee show that humans can provide decisive preferences in joint preferences protocol (\ref{sec:interplay_analysis}). Then, we analyze how joint preferences differ from conditional preferences on the same dataset(\S \ref{sec:interplay_analysis}).


\begin{table}[ht]
\centering
\large
\resizebox{\columnwidth}{!}{
\begin{tabular}{cccc}
\toprule
\textbf{\parbox[c]{3cm}{\centering Preference Acquisition}} & \textbf{Algorithm} & \textbf{\parbox[c]{3cm}{\centering Alignment Objective}} & \textbf{\parbox[c]{3cm}{\centering Different Instructions}} \\
\midrule
Score           & \citet{ethayarajh2024kto}   & Conditional & No  \\ \hline 
\multirow{5}{*}{ \parbox[c]{3cm}{\centering Comparison \\[0.5ex] (DPO Variants)} }   & \citet{dpo} & Conditional & No  \\
                     & \citet{park2024disentangling} & Conditional & No  \\
                     & \citet{liu2024lipo}  & Conditional & No  \\
                     & \citet{meng2024simposimplepreferenceoptimization} & Conditional & No  \\
                     & \citet{hong2024orpomonolithicpreferenceoptimization}  & Conditional & No  \\
\midrule
Pairwise  & \name (ours) & Joint       & Yes \\
\hline
\end{tabular}
}
\caption{\name differs from prior works along three key aspects: preference acquisition (scoring or comparison), objective (conditional or joint distribution), and their ability to handle non-identical instruction-responses.}
\label{tab:related_works}
\end{table}

Prior works like DPO and its variants \cite{rafailov2023direct,yin2024relative,liu2024lipo, meng2024simposimplepreferenceoptimization, hong2024orpomonolithicpreferenceoptimization, azar2023general} rely on conditional rankings, and thus do not have access to the joint distribution of human preferences in the ranking protocol (Table~\ref{tab:related_works}). While a rating protocol \cite{ethayarajh2024kto} allows for a comparison between responses from non-identical instructions, it can be inconsistent with rankings \cite{bansal2023peering} and ignores the possibility of preferences over a pair of chosen or rejected responses. \footnote{For instance, a pair of responses that achieves a score of 0, under the rating protocol, will result in an indecisive preference.} Thus, we propose \textbf{Joint Preference Optimization} (\name), a framework for aligning LLMs with our proposed joint preference elicitation scheme. Specifically, it upweights the joint probability of the chosen instruction-response pair over the rejected instruction-response pair. Furthermore, \name subsumes prior preference optimizations as conditional rankings are a special case of joint preferences (e.g., when the instructions are identical). 




We conduct experiments to explore new reasoning paths enabled by joint preference elicitation and alignment of LLMs with the \name objective. By analyzing feedback from conditional rankings and joint preferences protocols, along with explanations from human annotators, we uncover the complexities of the preference acquisition process (\S \ref{sec:interplay}). Using our \name algorithm, we align a Mistral-7B LLM with the collected preferences, achieving a $30\%$ and $18\%$ higher win rate against gold responses on unseen instructions from summarization and dialogue datasets, respectively. \name leverages diverse preferences effectively, outperforming \dpo by $5.2\%$ and $3.3\%$ win-rate points on the summarization and open-ended dialogues, respectively. In addition, \name outperforms KTO by $3.5\%$ on the open-ended dialogues dataset. It also surpasses both DPO and KTO in the AlpacaEval2 benchmark (\S \ref{sec:experiments}). This indicates that by utilizing the diverse preference signals present in the existing data, we can align an LLM robustly without acquiring additional instruction-response data. 

\section{Related Work}
\label{sec:detailed_relatedwork}


\paragraph{Alignment using Reinforcement Learning.} Aligning LLMs with human preferences using reinforcement learning is widely adopted to ensure LLMs follow user intents without being harmful \cite{ouyang2022training}. This alignment is usually done by first optimizing for a reward model on preference data \cite{bradley1952rank, likert1932technique, bansal2023peering}, followed by aligning the LLMs distribution that maximizes the learned reward model using Reinforcement Learning (RLHF) \cite{schulman2017proximal, ouyang2022training}, with optional Divergence penalty \cite{wang2023beyond} to avoid deviating from the reference policy. Additionally, \cite{dubois2023alpacafarm, lu2024llmscore, zheng2023judging}  observe that preferences from LLMs can also be used for alignments motivating Reinforcement Learning through AI feedback (RLAIF).  Contrary to prior work that collect preferences as conditional rankings, we emphasize that preference acquisition is a complex phenomenon and elicit joint preferences over instruction-response data. 


\paragraph{Reward Free Policy Alignment.} \citet{dpo} introduced Direct Preference Optimization (DPO) that optimizes directly within the model parameter space, hence eliminating the reward modeling step. \cite{liu2024lipo} extends this framework where instead of two responses, alignment is done over the list of responses while \cite{liu2023statistical} improves DPO using statistical rejection sampling. \cite{amini2024direct} provides an offset in the DPO objective to increase the margins and \cite{pal2024smaug} suggests adding an explicit penalty term to avoid a reduction in the likelihood of preferred pairs over the DPO training. Recent variants of DPO such as SimPO \citep{meng2024simposimplepreferenceoptimization} alleviates the need of reference policy in the objective. Contrary to our work where we compare the joint distributions, \cite{yin2024relative} proposes RPO that compares the conditional likelihood of a winning response with the losing response of another prompt. Beyond DPO, \cite{ethayarajh2024kto} proposed a human-aware loss function-based framework using prospect theory named KTO, and \cite{azar2023general} proposes IPO that uses human preferences expressed as pairwise preferences. Lastly, \cite{zhao2022calibrating} uses sequence likelihood calibration to align the model from human preference. Despite of a vast body of work arising from DPO, none of the existing methods can operate and contrast over the joint distribution of instruction-response pairs like the proposed \name algorithm. 

\section{Background}
\label{sec:background}

In this work, we focus on aligning language models to generate outputs preferred by humans in dimensions like helpfulness and coherence. Aligning a pretrained base model involves four steps: (a) collecting instruction-response data, (b) supervised fine-tuning (SFT), (c) acquiring preference data, and (d) deploying an alignment algorithm. The instruction-response data can be either hand-crafted by humans \citep{DatabricksBlog2023DollyV2,wang2022super} or generated by AI \citep{alpaca,tunstall2023zephyr}. Subsequently, the base model undergoes supervised fine-tuning (SFT) on the instruction-response pairs \citep{zheng2023judging, wang2023selfinstruct, wang2022super, peng2023instruction, xu2023wizardlm, yin2023dynosaur, wang2023far,yu2023metamath,toshniwal2024openmathinstruct}. 

Following SFT, feedback data is gathered (e.g., rankings) to train the SFT model via an alignment algorithm. This often involves training a reward model on preference data \citep{bradley1952rank, bansal2023peering} and aligning the model using Reinforcement Learning (RLHF) \citep{schulman2017proximal, ouyang2022training}. To address challenges in human feedback collection \citep{dubois2023alpacafarm, zheng2023judging}, LLMs can provide feedback, enabling Reinforcement Learning through AI Feedback (RLAIF). Alternatively, \cite{dpo} introduced Direct Preference Optimization (DPO) that mitigates the instability due to PPO for reward maximization by optimizing directly within the model parameter space, hence by-passing the reward modeling step.

\section{Joint Preference Optimization (\name)}
\label{sec:approach}


 A common technique for feedback data acquisition requires the annotators to assign a preferred and non-preferred label to a pair of responses for an instruction \citep{stiennon2020learning, rafailov2023direct, ouyang2022training, ethayarajh2024kto}. However, this paradigm does not capture the complex and multidimensional aspects of human preferences \citep{kendall1940method,thurstone2017law, zhi2024beyond}. Specifically, the heuristics for making preference decisions depend upon the context in which the comparison is made \cite{otto2022context}. While the traditional ranking protocol compares the two responses under a fixed context, humans can perform pairwise comparisons jointly over instruction-response pairs. For example, consider two summaries, A and B, for articles X and Y, respectively; then, a human can reason and choose the response that better summarizes its corresponding article. Hence, it is critical to align language models with diverse feedback signals to accurately model human behavior and decision making. 

In our setup, the annotator has to decide a \textit{chosen} and \textit{rejected} instruction-response pair $(I_a, R_a, I_b, R_b)$ where $R_a$ and $R_b$ are responses to the instructions $I_a$ and $I_b$, respectively, and $(I_a, R_a), (I_b, R_b) \in \mathcal{D}$. We note that our joint preference setup is equivalent to the original ranking protocol when $I_a = I_b$. As before, the preference reasoning from the annotator will be based on subjective dimensions like helpfulness, coherence, and harmlessness. Formally, the annotator assigns a joint ranking feedback $h(I_a, R_a, I_b, R_b) \in \{(I_a, R_m), (I_b,R_b), \text{Equal}\}$ where `Equal' indicates that both the instruction-response pairs are perceived equally good or bad. Finally, the joint preference optimization creates a pairwise feedback data $\mathcal{D}_H = \{(I_a, R_a, I_b, R_b, h(I_a, R_a, I_b, R_b))\}$. 

Our formulation suggests that we can obtain large-scale and diverse preference data (covering all possible combinations of $(I_a, R_a)$ and $(I_b, R_b)$) without the need for gathering additional instruction and response data, which is typically more difficult and costly to acquire. In addition, joint preference acquisition does not necessitate the presence of multiple responses for a given instruction that can be hard to collect for low-resource languages (e.g., Kalamang \footnote{\url{https://endangeredlanguages.com/lang/1891?hl=en}}). Specifically, one can collect an instruction-response data $\mathcal{D'} = \{(I_a, R_a)\}_{a=1}^{a=n}$, and acquire preferences on various combinations of instruction-response pairs. Finally, we assess the interplay between the joint feedback dataset $\mathcal{D}_H$ with the conditional feedback dataset $\mathcal{D}_C$ along with qualitative examples in \S \ref{sec:interplay}. 


We propose \name, a preference optimization objective that learns to align the language models with preferences acquired jointly over the instruction-response pairs. We assume a joint preference dataset $\mathcal{D}_X = \{(I_i^w, R_i^w, I_j^\ell, R_j^\ell)\}$, that can be constructed from $\mathcal{D}_H$, where $(I_i^w, R_i^w)$ and $(I_j^\ell, R_j^\ell)$ are the chosen and rejected instruction-response pairs, respectively. Similar to \dpo, we start with a reference model $p_{\text{ref}}$ which is usually the supervised finetuned language model $p_{\text{sft}}$. Specifically, the \name objective aims to learn an aligned model $p_\theta$ by upweighting the joint probability of preferred responses $p(R_i^w , I_i^w)$ over non-preferred responses $p(R_j^\ell , I_j^\ell)$. Formally, the optimization objective for \name, $\mathcal{L}(\theta; \mathcal{D}_X, \beta, p_{\text{ref}}) $ minimizes the expectation over $(I_j^w, R_j^w, I_j^\ell, R_j^{\ell}) \sim \mathcal{D}_X$ :

{\footnotesize
\begin{align}
     \mathbb{E}\left[ \log\left(\sigma\left(\beta \log\frac{p_{\theta}(R_i^w, I_i^w)}{p_{\text{ref}}(R_i^w, I_i^w)} - \beta \log\frac{p_{\theta}(R_j^\ell, I_j^\ell)}{p_{\text{ref}}(R_j^\ell, I_j^\ell)}\right)\right)\right] \label{eq:2} \numberthis
\end{align}
}%

where $\sigma$ denotes the sigmoid function and $\beta$ is a hyperparameter. Further, we show that Eq. \ref{eq:2} reduces to the \dpo formulation (Appendix Eq. \ref{eq:1}) when the instructions $I_i = I_j$ in Appendix \S \ref{sec:equivalence}. We can also see that the \name objective aims to learn an aligned model $p_\theta$ by upweighting the conditional probability of preferred responses $p(R_i^w | I_i^w)$ over non-preferred responses $p(R_j^\ell | I_j^\ell)$, along with a correction factor based on the prior probability of the instructions under the language model $p_\theta(I_i^w)$ and $p_\theta(I_j^\ell)$. In \S \ref{sec:experiments}, we utilize \name to align language models to generate human-preferred summaries and answer open-ended instructions.

\subsection{Comparison of Joint Preferences with Prior Preference Protocols}
\label{sec:app_prior_work}

\name improves over prior work by acquiring ranking-based preferences over non-identical instructions that has remained unexplored in prior work (please refer to table~\ref{tab:related_works}). Diverse human reasoning cannot be captured in the traditional conditional framework it fails to capture human preferences over varied contexts. Context influences decision-making and subjective valuation when capturing human preferences \citep{otto2022context}.
Prior work \cite{yin2024relative,liu2024lipo, meng2024simposimplepreferenceoptimization, hong2024orpomonolithicpreferenceoptimization} collect conditional preferences in a pairwise manner and are variants of DPO \cite{rafailov2023direct}. Thus, in our experiments we compare \name to DPO directly. Furthermore, we implement KTO \cite{ethayarajh2024kto} as a baseline since KTO removes the requirements of preference data that should be paired in preference optimization and implicitly compares responses from different instructions. We find that \name outperforms both DPO and KTO.
\section{Interplay between Feedback Protocols}
\label{sec:interplay}

\subsection{Instruction-Response Acquisition}
\label{sec:ir_data}


In this work, we consider two kinds of instruction-response data. First, we consider a filtered version of the TL;DR \textit{summarization} dataset \citep{volske2017tl} from \cite{stiennon2020learning} consisting of Reddit posts, their summarizes, and human preferences over a pair of summaries for a given post. Throughout the dataset, the task is of summarization that is close-ended and well-defined for language models. Second, we consider the single-turn dialogues from the helpful-base subset of the Anthropic-HH dataset \citep{bai2022constitutional}. Specifically, this dataset consists of \textit{open-ended} instructions with a collection of responses.  

Both these datasets have a train and test split where each instance consists of an instruction and a pair of responses $\mathcal{D} = \{(I_i, R_i^1, R_i^2)\}_{i=1}^n$ where $n$ is the dataset size. In this work, we collect AI and human feedback on the instruction-response data from their train split and filter instances with duplicate instructions. We can directly compare the two responses for the fixed instruction and construct a ranking feedback dataset $\mathcal{D}_C = \{(I_i, R_i^1, R_i^2, c(I_i, R_i^1, R_i^2))\}$. To acquire preferences jointly over the instruction-response pairs, we randomly select one of the responses from every instance of $\mathcal{D}$ to construct $\mathcal{D}_{S} = \{(I_i, R_i)\}$ where $R_i \in \{R_i^1, R_i^2\}$. Subsequently, we create the joint instruction-response pairs by matching every instance $(I_i, R_i) \in \mathcal{D}_S$ with another instance $(I_j, R_j) \in \mathcal{D}_S$ to get $\mathcal{D}_H = \{(I_i, R_i, I_j, R_j, h(I_i, R_i, I_j, R_j))\}$ of the same size as $\mathcal{D}_S$ and $\mathcal{D}_C$. In \S \ref{sec:experiments}, we will utilize $\mathcal{D}_S$ to SFT the base model, and $\mathcal{D}_C$ and $\mathcal{D}_H$ as preference datasets for LLM alignment. We provide the dataset statistics in Appendix \S \ref{sec:data_stats}.



\subsection{Feedback from AI and Humans}
\label{sec:feedback_ai_human}
\begin{table}[h]
\centering
\resizebox{\linewidth}{!}{%
\begin{tabular}{lccc}
\hline
\textbf{Dataset}            & \textbf{Ranking}                & \textbf{H-H} & \textbf{H-AI} \\
\hline
TL;DR   &     \multirow{2}{*}{\makecell{Conditional}}              & 69\%        & 63\%               \\
Anth.-Helpful &   & 70.1\%      & 72\%             \\\hline
TL;DR     &  \multirow{2}{*}{\makecell{Joint\\(Non-Identical)}}       & 62\%        & 60\%               \\
Anth.-Helpful & & 68.8\%      & 71\%    \\\hline          
Average & & 67.5\%  & 66.5\%    \\\hline          
\end{tabular}
}
\caption{Agreement analysis between within human annotators and gold human feedback and AI (ChatGPT) feedback. We perform the agreement calculations for the two ranking protocols: (a) conditional rankings, and (b) joint preferences where instructions are non-identical. In addition, we assess the agreement rates over the two datasets: (a) TL;DR and (b) Anthropic-helpful dataset.}
\label{tab:agreement}
\end{table}

\paragraph{Feedback from AI.} 
We collect feedback over a pair of responses for a fixed instruction, and joint instruction-response pairs without identical instructions from GPT-3.5-Turbo-0125 (ChatGPT). We choose ChatGPT due to its affordability  (e.g., output tokens from ChatGPT are $50 \times$ cheaper than GPT-4).
To mitigate any ordering bias, we run two queries for all comparisons. When the ChatGPT preferences flip by flipping the order of the two responses, then we consider it a tie, similar to \citep{bansal2023peering,bitton2023visitbench}. Specifically, we instruct the AI to to choose the response that is more accurate, coherent, and harmless.

To collect conditional preferences over a pair of responses for a fixed instruction, we prompt ChatGPT to choose a response. To collect AI preferences jointly over the instruction-response pairs, we prompt ChatGPT to decide the response that better answers its corresponding instruction. We collected approximately $50K$ comparisons across both feedback acquisition protocols for the summarization and Anthropic-Helpful dataset, at a cost of $\$100$. We provide the AI prompts in Appendix \S \ref{sec:gpt_prompts}.

 

\paragraph{Feedback from Humans.} In this work, we also collect human preferences for $2000$ comparisons over TL;DR and Anthropic-Helpful dataset. Specifically, we ask two annotators to assign a chosen response or chosen instruction-response pair based along the same dimensions as ChatGPT guidelines. Annotators can also choose `equal' if they fail to make a identify a decisive preference. The human annotations were collected from Amazon Mechanical Turk (AMT). We recruited the participants that passed a preliminary qualification exam. 
In total, we spent $\$720$ on human feedback acquisition. We provide the screenshot of the annotation UI in Appendix \S \ref{sec:human_annotation}.



\subsection{Agreement Analysis} 
\label{sec:agreement_analysis}

\begin{figure}[t]
    \centering
    \includegraphics[width=\linewidth]{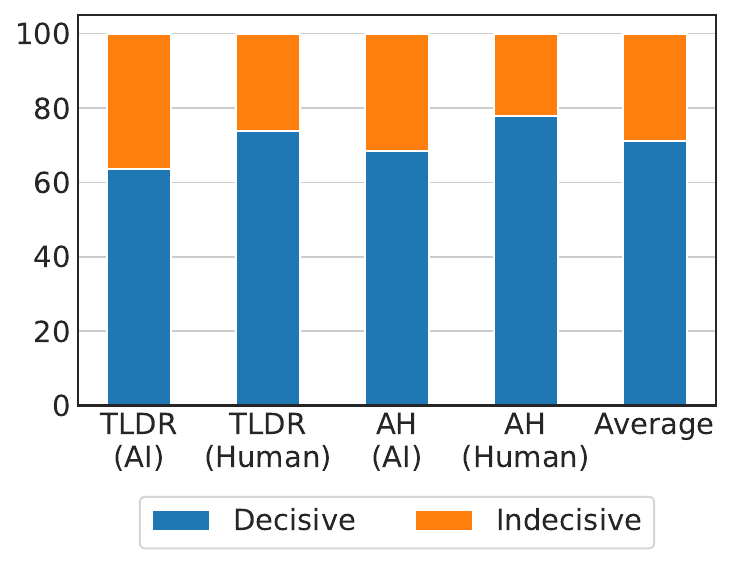}
    \caption{Results for the preferences acquired jointly over the instruction-response pairs where both the responses were either chosen or rejected under the conditional rankings protocol. Here, \textit{decisive} implies that the annotators could assign a preference to one instruction-response pair over the other. Here, AH means Anthropic-Helpful. }
    \label{fig:cc_rr}
\end{figure}

\begin{figure}[t]
    \centering
    \includegraphics[width=\linewidth]{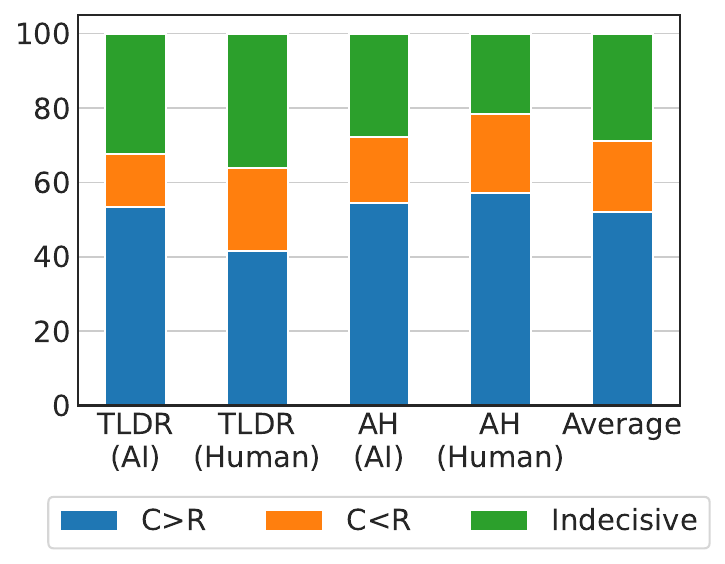}
    \caption{Results for the preferences acquired jointly over the instruction-response pairs where one of the instruction-response pair was chosen (C) and the other pair was rejected (R) under the conditional rankings. Here, $C < R$ implies that the instruction-response pair that was rejected under conditional rankings is actually preferred over an instruction-response pair that was rejected under the conditional rankings. Here, AH means Anthropic-Helpful. }
    \label{fig:cr_rc}
\end{figure}

We present the annotator agreement scores in Table \ref{tab:agreement}. We find that the average agreement is $67.5\%$ and $66.5\%$ between the human-human and human-AI annotators, respectively. 
Specifically, we find that in the conditional setup (identical instructions), the average human-human agreement is 69.5\% for the TLDR and Anthropic-Helpfulness datasets. Similarly, in the joint setup (non-identical instruction-response pairs), the average inter-rater agreement is 68\% on the same datasets.These agreement scores are comparable to those reported in prior studies \citep{alpaca_eval,bansal2023peering}, demonstrating the robustness of our evaluation. Interestingly, we find that agreement scores vary depending on the underlying distribution of instruction-response pairs and the choice of ranking protocol. Overall, our results highlight that humans and AI can provide rich feedback in both conditional and joint setup with acceptable agreement.

\begin{table*}[h]
\caption{Response B and D are rejected under the conditional rankings. However, in the the joint setup humans consider that the response B answers Instruction 1 better than response D answers Instruction 2. Thus, joint setup human give a decisive feedback between two responses that were rejected under the conditional rankings. }
\centering
\resizebox{\linewidth}{!}{%
\begin{tabular}{p{0.2\linewidth}| p{0.8\linewidth}}
\toprule
\small{\textit{Instruction 1}}& \small{My dog has been scratching a lot, but he doesn't have any fleas. What else could be causing this?}                                                                                                                                                                                           \\
\small{\textit{Response A}}& \small{Fleas are the most likely cause of a dog's scratching, but there are many other possible reasons. It's worth taking him to the vet to check things out.}                                                                                                                                                \\
\small{\textit{Response B}}& \small{Oh no! Maybe his skin is dry. I bet if you gave him a good bath, that would help!}                                                                                                                                                                                                                      \\
\small{\textit{Conditional Preference 1}} & \small{\textbf{Human Preference: A>B.}  Human Explanation: Response A answered the query by giving possible causes of the problem along with a suggestion of just going to the vet.}                                                                            \\\hline
\textit{Instruction 2}            & \small{Do you have any good chili recipes?}                                                                                                                                                                                                                                                            \\
\small{\textit{Response C}}& \small{I’m happy to give some suggestions and ideas, but I’m not really an expert on chili. What kind of chili do you want to make?}                                                                                                                                                                         \\
\small{\textit{Response D}}& \small{I’m afraid not, I don’t have any experience in cooking, and have been programmed to give answers and ask for clarification on issues in the area of cooking!}\\

\small{\textit{Conditional Preference 2}} &\small{\textbf{Human Preference: C>D.} Human Explanation: Response D claims they can't help and then later on in the same sentence claims they've been programmed to give answers on cooking issues, which doesn't make much sense. C asks a reasonable clarifying question.}\\\hline
\small{\textit{Joint preference}}&\small{\textbf{Human Preference: (Instruction 1, Response B) > (Instruction 2, Response D). }Human Explanation: Response B provides answer to the Instruction 1 while Response D does not provide an accurate answer to Instruction 2.}      \\\bottomrule                       
\end{tabular}%
}
\label{tab:rr_anthropic_helpful}
\end{table*}

\subsection{Interplay Analysis}
\label{sec:interplay_analysis}

\paragraph{Setup.}

Here, we aim to study the interaction between the conditional rankings and joint rankings over non-identical instructions. Formally, each instruction-response pair $(I_i, R_i^x) $ from the conditional pairwise feedback dataset $\mathcal{D}_C$ where $x \in \{1,2\}$ can be assigned a preference $\mathcal{P}_{C}(I_i, R_i^x)$ among \{`chosen', `reject', `equal'\}. For instance, $\mathcal{P}_C(I_i, R_i^1) = `\text{chosen}'$ and $\mathcal{P}_C(I_i, R_i^2) = `\text{reject}'$ if the response $R_i^2$ is rejected in the dataset $\mathcal{D}_C$ i.e., $c(I_i, R_i^1, R_i^2) = R_i^1$. Similarly, we can assign a preference $\mathcal{P}_{H}(I_i, R_i)$ among \{`chosen', `reject', `equal'\} to an instruction-response pair $(I_i, R_i)$ from the joint preference dataset $\mathcal{D}_{H}$. For instance, $\mathcal{P}_{H}(I_i, R_i) = `\text{chosen}'$ and $\mathcal{P}_H(I_j, R_j) = `\text{reject}'$ where $i != j$ if the instruction-response pair $(I_i, R_i)$ is chosen in the dataset $\mathcal{D}_H$ i.e., $h(I_i, R_i, I_j, R_j) = (I_i, R_i)$.

To study the interplay between the preference protocols, we assess $\mathcal{P}_{C}(I_i, R_i)$,  $\mathcal{P}_C(I_j, R_j)$, $\mathcal{P}_{H}(I_i, R_i)$ and  $\mathcal{P}_H(I_j, R_j)$ for all $(I_i, R_i, I_j, R_j) \in \mathcal{D}_H$. Here, if $\mathcal{P}_H(I_i, R_i) = `\text{chosen}'$ then $\mathcal{P}_H(I_j, R_j) = `\text{reject}'$.

\paragraph{Annotators show decisiveness in joint setup.} In Figure \ref{fig:cc_rr}, we study the joint preferences over the instruction-response pairs $(I_i, R_i, I_j, R_j)$ where the individual instruction and response data is either \textit{chosen} or \textit{rejected} in the conditional feedback protocol (e.g., $\mathcal{P}_C(I_z, R_z) = `\text{chosen}'$ for $z \in \{i,j\}$). Interestingly, we find that the annotators can assign a decisive preference (e.g., $(I_i, R_i) > (I_j, R_j)$) in $71\%$ of the joint comparisons. While we observe that the annotators assign a `tie' to $29\%$ of the comparisons. This highlights the existence of valid preference decisions that remained obfuscated in the traditional approach for ranking-based feedback acquisition.

\paragraph{Annotator preferences depend on context and comparisons.} In Figure \ref{fig:cr_rc}, we study the joint preference over the instruction-response pairs $(I_i, R_i, I_j, R_j)$  where one of them is \textit{chosen} and the other is \textit{rejected} in the conditional feedback protocol (e.g., $\mathcal{P}_C(I_i, R_i) = `\text{chosen}'$ and $\mathcal{P}_C(I_j, R_j) = `\text{reject}'$). To our surprise, we find that the annotators do not prefer the instruction-response pair that was chosen under the conditional feedback protocol in $48\%$ of the comparisons. Specifically, there are $19\%$ of the comparisons where rejected pair (R) is preferred over the chosen pair (C) and $28\%$ of the comparisons where the annotators considered the pair equally good or bad. This highlights that both human and AI annotators' perceptions of preferred and non-preferred data depends on the context of the comparisons, indicating that feedback acquisition is a multifaceted phenomenon.

\paragraph{Qualitative Case Study.} To understand the heuristics used in preference annotations, we asked human annotators to provide brief explanations for their feedback decisions in both conditional and joint preference setups. In Table \ref{tab:rr_anthropic_helpful}, we observe that humans provide reasonable explanations for rejecting responses B and D in the conditional setup. However, when these same rejected responses are presented in a joint setup, humans offer decisive feedback, basing their decisions on the accuracy of the responses—an aspect not emphasized in the explanations for the conditional preferences. We present additional qualitative examples in Appendix \S \ref{sec:append_qualitative} to showcase the multi-faceted nature of human feedback revealed through joint preferences.


\section{LLM Alignment with JPO}
\label{sec:experiments}

\begin{figure*}[t]
    \centering
    \begin{subfigure}[ht]{0.33\textwidth}
        \centering
        \includegraphics[width=\textwidth]{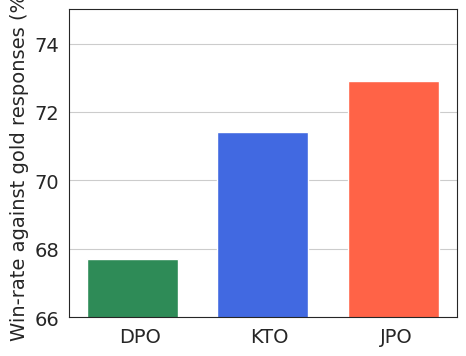}
        \caption{Performance on TL;DR}
        \label{fig:result_tldr}
    \end{subfigure}%
    \begin{subfigure}[ht]{0.33\textwidth}
        \centering
        \includegraphics[width=1.02\textwidth]{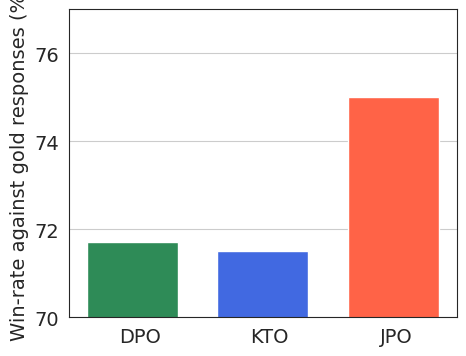}
        \caption{Performance on Anthropic-Helpful}
        \label{fig:result_anthropic}
    \end{subfigure}
    \begin{subfigure}[ht]{0.33\textwidth}
        \centering
        \includegraphics[width=\textwidth]{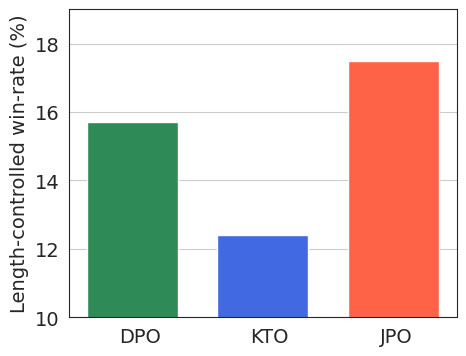}
        \caption{Performance on AlpacaEval2.0} 
        \label{fig:result_alpacaeval2}
    \end{subfigure}
    \caption{Results for aligning LLMs with \name. We utilize ChatGPT
to compare the model responses with the gold responses. In 4a and 4b we report the results averaged over three runs of the preference optimization objectives and three sampling temperatures. In 4c, we report the results for temeperature set at 0.7 for AlpacaEval2.}
    \label{fig:main_results}
\end{figure*}

In previous sections, we explore how we collect ranking-based feedback for a pair of responses for identical and non-identical instructions. Here, we study how to leverage joint and conditional feedback data to align large language models effectively with JPO \S\ref{sec:approach}. 

\subsection{Setup}
\label{sec:setup}

We align Mistral-7B ~\citep{jiang2023mistral}, a strong base LLM for its model capacity. We experiment with two datasets that exhibit diverse characteristics: (a) TL;DR dataset where the instruction is to summarize Reddit posts, and (b) open-ended dialogues from Anthropic-Helpful dataset (\S \ref{sec:ir_data}). In particular, we collect a conditional preference data $\mathcal{D}_C$ and joint preference data for non-identical instructions $\mathcal{D}_H$ of similar data sizes from ChatGPT. Then, we convert the conditional preference data into an instruction-response data for supervised finetuning $\mathcal{D}_{\text{SFT}}$.

We supervise finetune the entire base LLM model parameters with the SFT dataset to ensure that the preference data is in-policy for the alignment algorithms ~\citep{rafailov2023direct}. \name algorithm can utilize both the conditional preferences and joint preference with non-identical context. \footnote{It is because the conditional preferences can be viewed as joint preferences with identical context.} Thus, we train the base LLM with \name algorithm after merging conditional and joint preferences data $\mathcal{D}_M = \mathcal{D}_C \cup \mathcal{D}_H$. We provide more details on training setup in Appendix \S \ref{sec:alignment_training}. We also apply we apply \dpo and KTO algorithm on the SFT model to compare against \name. 



Post-alignment, we evaluate the aligned model responses against the gold responses in the dataset's test split. 
We utilize ChatGPT to compare model and gold responses to decide on the preferred response or a tie. Finally, we report the win-rate of the model responses as the evaluation metric for $500$ unseen instructions from the test sets. In particular, we report the win-rate against the gold responses for the model generated responses averaged across three sampling temperatures $\text{T} \in \{0.001, 0.5, 1.0\}$.


\subsection{Results }
\label{sec:results}




We compare the performance of the \dpo, KTO, and \name aligned models in Figure \ref{fig:result_tldr} and \ref{fig:result_anthropic}. Interestingly, we find that \name outperforms \dpo by $5.2\%$ and $3.3\%$ win-rate points on the summarization and helpfulness datasets, respectively. In addition, the performance of \name is better than \dpo across all the sampling temperatures. We observe similar trends in comparison to KTO.  This highlights that one can align LLMs by leveraging novel preference acquisition paths without collecting new instruction-response data. 

\begin{figure}[h]
  \begin{center}
    \includegraphics[scale=0.5]{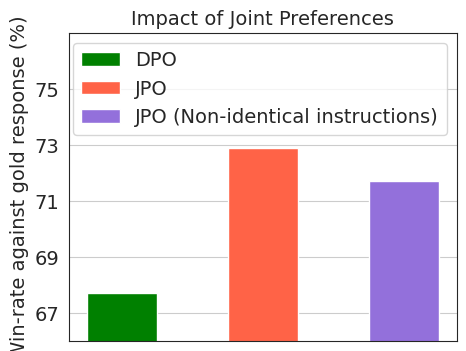}
  \end{center}
  \caption{Win-rate against the gold response in the TL;DR averaged over three sampling temperatures. We study the impact of the joint preferences over non-identical instructions using \name.}
  \label{fig:ablation_joint}
\end{figure}

\subsection{Extending the Results to AlpacaEval}
\label{sec:result_alpacaeval}

Similar to \citet{rafailov2023direct}, we show the usefulness of aligning LLMs using joint preferences via \name on close-ended (e.g., summarization) and open-ended tasks (e.g., dialogues). However, we further evaluate the effectiveness of our method on a broad set of instructions in the AlpacaEval2 leaderboard using the length-controlled win-rate metric \cite{alpaca_eval}. Additional experimental details are provided in Appendix \ref{sec:alpacaeval2}.

We present the results in Figure \ref{fig:result_alpacaeval2} where we compare \name with \dpo and KTO. We find that the \name-aligned LLM outperforms DPO-aligned LLM by $1.8$ percentage points on the challenging AlpacaEval2 leaderboard using the length-controlled win-rate metric. This indicates that the \name can utilize the joint preferences and elicit helpful and accurate responses for a broad set of instructions.

\section{Ablations}
\paragraph{Impact of Joint Preferences over Non-Identical Instructions.} Here, we aim to understand the sole impact of joint preferences acquired over non-identical instructions on the performance of the \name algorithm. To do so, we train \name algorithm with joint feedback data $\mathcal{D}_H$ only. We present the results averaged across the three sampling temperatures in Figure \ref{fig:ablation_joint}. We find that training with joint preferences over non-identical instructions achieves $71.7\%$ win-rate on the summarization  dataset. This indicates that it is possible to align LLMs with just joint preferences over instruction-response data \textit{without} any conditional preferences too. Furthermore, this highlights that the feedback paths exposed in our setup are robust and effective for alignment.

\paragraph{Impact of Dataset Size.} In the main experiments, we demonstrated that \name can learn effectively from a combination of conditional preferences (i.e., 100\% of the conditional rankings) and joint preferences over non-identical instructions (of the same size as the conditional preferences). To assess the impact of dataset size, we trained \name using a 50:50 mix of conditional and joint preferences for the TL;DR dataset, with a fixed total size as that of conditional. Our results in Figure \ref{fig:ablation_dpo_jpo_size_controlled} show that \name achieves a win rate of 71.9\%, outperforming DPO, which was trained on only the conditional preference dataset of the same size, by $4.2$ percentage points.

\begin{figure}[h]
  \begin{center}
    \includegraphics[scale=0.48]{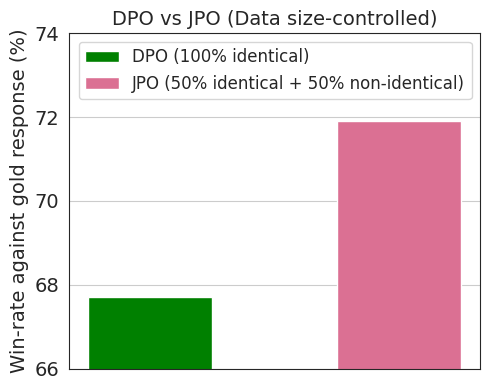}
  \end{center}
  \caption{Win-rate against the gold response in the TL;DR dataset averaged
over three sampling temperatures. We study the impact of dataset size on \name.}
  \label{fig:ablation_dpo_jpo_size_controlled}
\end{figure}

\paragraph{Data Scaling.}

We aim to understand the impact of increasing the number of preferences collected jointly over instruction-response pairs, for non-identical instructions, on the win-rate against the reference summaries in the TL;DR summarization dataset using \name algorithm. We present the results in Figure \ref{fig:tldr_scaling} for the sampling temperature of $0.001$. We find that the win-rate scales from $42.4\%$ to $71.7\%$ as the size of the dataset increases from $100$ to $9000$ comparisons. We also observe that the change in the win-rate is within $1\%$ when the dataset size increases from $4000$ to $9000$. This highlights that the performance gains are non-linear with the dataset size. In the future, it would be pertinent to explore techniques for selecting a subset of joint preference comparisons that result in maximum performance gains.

\begin{figure}[h]
    \centering
\includegraphics[scale=0.5]{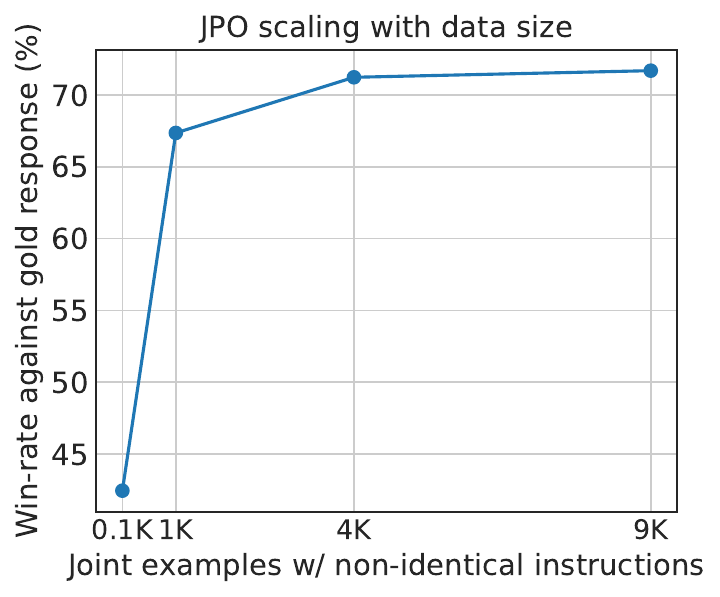}
    \caption{Results for scaling the feedback data size on TL;DR summarization dataset. We find that the win-rate improves with the increase in the dataset size using the \name preference optimization objective.}
    \label{fig:tldr_scaling}
\end{figure}
\section{Conclusion}
\label{sec:conclusion}

In this work, we propose a framework that elicits preferences jointly over instruction-response pairs. Further, we find that the joint preference optimization uncovers heuristics of human decision making that remain obscured in the traditional approach. Additionally, we propose \name, a novel preference optimization objective for aligning LLMs. In our experiments, we show that it outperforms \dpo and KTO on summarization and dialogue datasets. \name also outperforms \dpo and KTO on AlpacaEval2. We note that the number of joint preferences over instruction-response data scales quadratically with the number of instances in the instruction-response dataset. Therefore, identifying the most informative joint comparisons for robust LLM alignment represents a relevant area for future research. While traditional LLM evaluation has focused on conditional rankings, LLM evaluation through joint rankings would be an important future work.

\section{Limitations}
\label{sec:limitations}

While there are various protocols for feedback acquisition, our work is focused on acquiring rankings on a pair of responses under a fixed context or jointly over instruction-response pairs. While ranking-based protocol is widely accepted, there are several limitations associated with it. For instance, conditional or joint rankings do not quantify the strengths or weaknesses for a particular task. In addition, \cite{bansal2023peering} show that different forms of feedback data often disagree with each other. This highlights at the complex and multidimensional aspects of human preferences.

In our work, we propose the joint acquisition of feedback for pairs of instruction-response over diverse tasks (e.g., comparing a movie review with an e-commerce product review). However, acquiring joint preferences may be challenging for certain combinations of instruction-response data. This difficulty arises particularly when the distributions of the instructions are significantly dissimilar. For example, it may be challenging to compare feedback for a response to the instruction 'how to cook fried rice?' with a response to 'how to steal my neighbor's wifi?'. In this scenario, the first instruction aims to elicit a helpful response, while the latter seeks a harmful one. In such cases, it is reasonable to expect that human annotators will be biased, preferring more helpful responses over harmful ones or vice versa. Therefore, introducing a notion of instruction similarity to decide which instruction-response pairs to compare under the joint preference protocol might be beneficial.

Finally, we acquire human annotations from Amazon Mechanical Turk (AMT) where most of the annotators belong to the U.S. or Canada regions. Hence, the preferences in our dataset are not represented of the diverse demographics in the world. It is pertinent that the future work should study the impact of the diverse groups on the feedback data behaviours and subsequent LLM alignment \citep{zhao2023group}. 

\bibliography{aaai25}
\clearpage

\appendix

\input
\section{Ranking Feedback Acquisition Protocol}
\label{sec:feedback_data}

Assume a supervised finetuned language model $p_{\text{sft}}$ that is capable of responding to user instructions (e.g., imperative tasks or questions). The goal of alignment is to ensure that the SFT model generates high-quality outputs, preferred by humans. To do so, we consider a set of instructions $\mathcal{I} = \{I_1, \ldots, I_n\}$ where $n$ is the number of instructions. Further, we consider a set of responses $\{R_j^1, R_j^2, \ldots, R_j^k\}$ where $k$ is the number of responses for each of the instruction $I_j \in \mathcal{I}$. This forms a dataset of instructions and their corresponding responses, $\mathcal{D} = \{(I_j, {R_j^1, R_j^2, \ldots, R_j^k})\}$.\footnote{We will drop the iterator over j when defining the dataset for the ease of notation.} Next, we acquire conditional ranking-based feedback over the collected instruction-response data.

Under this feedback acquisition protocol, the annotator selects a \textit{chosen} and \textit{rejected} response from $\{R_j^x, R_j^y\}$ \textit{conditioned} on the instruction $I_j$ where $x, y \in \{1, 2, \ldots, k\}$. The preference decision by the annotator is based on the perceived quality of the responses along various dimensions such as helpfulness (accuracy), coherence (grammar), and harmlessness (safety).

Formally, the annotator assigns an instruction-conditioned ranking feedback $c(I_j, R_j^x, R_j^y) \in \{R_j^x, R_j^y, \text{Equal}\}$ where `Equal' indicates that both responses are perceived equally good or bad. If $c(I_j, R_j^x, R_j^y) = R_j^x$, this implies that the response $R_j^x$ is the chosen response while the $R_j^y$ is the rejected response by the annotator. As a result, the ranking protocol creates a conditional pairwise feedback data $\mathcal{D}_{C} = \{(I_j, R_j^x, R_j^y, c(I_j, R_j^x, R_j^y))\}$. Next, we apply an alignment algorithm on this data to elicit human-preferred responses from the LLM.

\section{Alignment Algorithms}
\label{sec:alignment_algorithm}

\citet{rafailov2023direct} introduced direct preference optimization (\dpo) that can align a language model without utilizing on an external reward model. Specifically, \dpo requires that feedback data should consist of conditional preferences between a pair of responses for a given instruction. Additionally, the algorithm assumes a preference dataset $\mathcal{D}_C$ and the reference model $p_{\text{ref}}$ which is usually the supervised finetuned language model $p_{\text{sft}}$. Specifically, it aims to train an aligned model $p_\theta$ using an optimization objective that upweights the conditional probability of the chosen response $p_\theta(R_j^w | I_j)$ over the rejected response $p_\theta(R_j^\ell | I_j)$ where $R_j^w$ and $R_j^\ell$ are the chosen and rejected response, respectively. Formally, the optimization objective for \dpo, $\mathcal{L}_{\dpo}(\theta; \mathcal{D}_C, \beta, p_{\text{ref}})$ minimizes the expectation over $(I_j, R_j^w, R_j^{\ell})\sim \mathcal{D}_C$:

{\footnotesize
\begin{equation}\label{eq:1}
   \mathbb{E}\left[   \operatorname{log}\left(\sigma\left(\beta \operatorname{log}\frac{p_{\theta}(R_j^w|I_j)}{p_{\text{ref}}(R_j^w|I_j)} - \beta \operatorname{log}\frac{p_{\theta}(R_j^\ell|I_j)}{p_{\text{ref}}(R_j^\ell|I_j)}\right)\right)\right]
\end{equation}
}%

where $\sigma$ denotes the sigmoid function and $\beta$ is a hyperparameter. Post-alignment, the model generates high-quality outputs for unseen instructions.


\section{Dataset Statistics}
\label{sec:data_stats}

We present the dataset statistics in Table \ref{tab:dataset_statistics}. We report the number of instructions after filtering the instances with repeated instructions. Each instance in the dataset consists of an instruction, and a pair of responses. Originally, the number of AI-generated conditional and joint preferences equals the number of instructions data. Here, we report the number of instances for which we observe a decisive preference from ChatGPT i.e., after removing the ties.

\begin{table*}[h]
\centering
\begin{tabular}{cc}
\hline
\textbf{OpenAI TL;DR Summarization Dataset}                                          &   \textbf{Number}    \\\hline
Number of instructions  & 11.8K \\
Number of AI generated conditional preferences & 7.2K  \\
Number of AI generated joint preferences & 7.7K  \\\hline    
\textbf{Anthropic-Helpful Dataset}                                                   &       \\\hline
Number of instructions & 12.8K \\
Number of AI generated conditional preferences & 9.4K  \\
Number of AI generated joint preferences              & 8.5K \\\hline
\end{tabular}
\caption{Statistics for the train split of the summarization and open-ended dialogue datasets.}
\label{tab:dataset_statistics}
\end{table*}

\section{Proof for \name subsuming \dpo}
\label{sec:equivalence}
 We highlight a result that reduces \name into \dpo when the prompts are the same in Lemma \textbf{E.1}.

\begin{table*}
\begin{lemma}
Under the case where $\mathcal{D}_{X} = \{(I_i, R_i, I_i, R_j)\}$, that is, prompts are the same for preferred and not-preferred prompt generation pairs, $ \mathcal{L}_{\dpo}(\theta; \mathcal{D}_C, \beta, p_{\text{ref}}) = \mathcal{L}_{\name}(\theta; \mathcal{D}_X, \beta, p_{\text{ref}})$, where $\mathcal{D}_C = \{(I_j, R_j^w, R_j^{\ell})\}$. 
\end{lemma}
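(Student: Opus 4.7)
The plan is to prove the lemma by direct algebraic manipulation, using the chain rule of probability to decompose the joint likelihoods inside the JPO objective and then showing that the instruction-prior terms cancel. Concretely, I would start from the JPO expression (Eq.~\ref{eq:2}) evaluated on the restricted dataset $\mathcal{D}_X = \{(I_i, R_i, I_i, R_j)\}$, where the same instruction $I_i$ appears in both the chosen and rejected slots.

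The first step is to rewrite each joint probability as a product, i.e.\ $p_\theta(R, I) = p_\theta(R \mid I)\,p_\theta(I)$ and analogously for $p_{\text{ref}}$. Substituting this into the log-ratio terms inside the sigmoid, each log-ratio $\log \tfrac{p_\theta(R, I)}{p_{\text{ref}}(R, I)}$ splits additively into $\log \tfrac{p_\theta(R \mid I)}{p_{\text{ref}}(R \mid I)} + \log \tfrac{p_\theta(I)}{p_{\text{ref}}(I)}$. Because both terms in the difference inside the sigmoid share the \emph{same} instruction $I_i$ (by the hypothesis of the lemma), the two copies of $\log \tfrac{p_\theta(I_i)}{p_{\text{ref}}(I_i)}$ appear with opposite signs and cancel exactly. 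What remains inside the sigmoid is precisely the DPO log-ratio difference $\beta \log \tfrac{p_\theta(R_i^w \mid I_i)}{p_{\text{ref}}(R_i^w \mid I_i)} - \beta \log \tfrac{p_\theta(R_j^\ell \mid I_i)}{p_{\text{ref}}(R_j^\ell \mid I_i)}$.

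The second step is to match the expectations. Under the assumption that $\mathcal{D}_X$ only contains tuples with identical instructions, the induced marginal over $(I_i, R_i^w, R_j^\ell)$ coincides with the conditional preference dataset $\mathcal{D}_C$ used by DPO. Taking expectations on both sides therefore yields $\mathcal{L}_{\name}(\theta; \mathcal{D}_X, \beta, p_{\text{ref}}) = \mathcal{L}_{\dpo}(\theta; \mathcal{D}_C, \beta, p_{\text{ref}})$, as stated in Eq.~\ref{eq:1}.

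There is no substantive obstacle here; the argument is just the chain rule followed by cancellation. The only subtlety worth flagging is that the cancellation relies on $p_\theta(I)$ being well-defined as a language-model prior over instructions (so that it makes sense to write $p_\theta(R, I) = p_\theta(R \mid I)\,p_\theta(I)$), and on this prior being the \emph{same} random variable on both sides of the difference—which is exactly the hypothesis $I_i = I_j$. I would therefore state this decomposition explicitly as the opening line of the proof, then present the cancellation as a one-line display, and finish by identifying the induced distribution on $\mathcal{D}_X$ with $\mathcal{D}_C$.
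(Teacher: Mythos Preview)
Your proposal is correct and matches the paper's own proof essentially line for line: the paper also starts from the \name objective, applies the chain rule $p(R,I)=p(R\mid I)\,p(I)$ to each joint term, cancels the instruction-prior contributions once $I_i^w=I_j^\ell$, and identifies the remaining expectation with $\mathcal{L}_{\dpo}$. Your added remark that the cancellation hinges on $p_\theta(I)$ being a well-defined prior and on the two instructions coinciding is a nice explicit statement of what the paper leaves implicit.
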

\begin{proof}
\begin{align}
         \mathcal{L}_{\name}(\theta; \mathcal{D}_X, \beta, p_{\text{ref}}) &= \mathbb{E}_{(I_j^w, R_j^w, I_j^\ell, R_j^{\ell}) \sim \mathcal{D}_X} \left[ \log\left(\sigma\left(\beta \log\frac{p_{\theta}(R_i^w, I_i^w)}{p_{\text{ref}}(R_i^w, I_i^w)} - \beta \log\frac{p_{\theta}(R_j^\ell, I_j^\ell)}{p_{\text{ref}}(R_j^\ell, I_j^\ell)}\right)\right)\right] \label{eq_appen:2} \numberthis \\
         &= \mathbb{E}_{(I_j^w R_j^w, I_j^\ell, R_j^{\ell}) \sim \mathcal{D}_X} \left[ \log\left(\sigma\left(\beta \log\frac{p_{\theta}(R_i^w | I_i^w)p_{\theta}(I_i^w)}{p_{\text{ref}}(R_i^w | I_i^w)p_{\text{ref}}( I_i^w)} 
        - \beta \log\frac{p_{\theta}(R_j^\ell| I_j^\ell)p_{\theta}(I_j^\ell)}{p_{\text{ref}}(R_j^\ell, I_j^\ell)p_{\text{ref}}(I_j^\ell)}\right)\right)\right]  \\
        &= \mathbb{E}_{(I_j, R_j^w, R_j^{\ell}) \sim \mathcal{D}_C} \left[   \operatorname{log}\left(\sigma\left(\beta \operatorname{log}\frac{p_{\theta}(R_j^w|I_j)}{p_{\text{ref}}(R_j^w|I_j)} - \beta \operatorname{log}\frac{p_{\theta}(R_j^\ell|I_j)}{p_{\text{ref}}(R_j^\ell|I_j)}\right)\right)\right] \\
        &= \mathcal{L}_{\dpo}(\theta; \mathcal{D}_C, \beta, p_{\text{ref}})
\end{align}
The proof follows from applying bayes rule and substituting $I_j^w = I_j^{\ell} = I_j$.
\end{proof}
\label{tab:proof}
\end{table*}

\section{\name on AlpacaEval2 Leaderboard}
\label{sec:alpacaeval2}

We train Mistral-7B base model on the UltraChat-200K dataset \cite{ding2023enhancing} to get the SFT (reference) model. Subsequently, we utilize the conditional preference dataset, Ultrafeedback-binarized (60K instances)  \cite{cui2023ultrafeedback} to align the SFT model using DPO as the baseline algorithm. Specifically, we utilize the training setup highlighted in the alignment handbook for SFT and DPO \cite{tunstall2023zephyr}. Since \name algorithm allows access to joint preferences, we construct non-identical instruction-response tuples by pairing a chosen instruction-response ($I_{chosen}, R_{chosen}$) with a rejected instruction-response ($I_{reject}, R_{reject}$) from the Ultrafeedback dataset. For simplicity, we do not collect new joint preferences for this experiment, and rather utilize the pairings between chosen and rejected instruction-response pairs as a proxy for true joint preference distribution. In particular, we train with \name algorithm for one epoch, and sweep over three learning rates \{1e-7, 3e-7, 5e-7\} and set the $\beta = {0.01}$. Post-training, we sample responses from the SFT model, DPO-aligned LLM, KTO-aligned LLM, and \name-aligned LLM for the instructions in the AlpacaEval2 with a temperature of $0.7$.




\begin{table*}[h]
\resizebox{\linewidth}{!}{%
\begin{tabular}{lcccccccc}
\\\hline
\multicolumn{1}{l}{} & \multicolumn{4}{c}{\textbf{TL;DR}}                                          & \multicolumn{4}{c}{\textbf{Anthropic-Helpful}}                              \\
\cmidrule(lr){2-5} \cmidrule(lr){6-9}
\multicolumn{1}{l}{\textbf{Method}}                & \textbf{T = 0.001} & \textbf{T = 0.5} & \textbf{T = 1.0} & \textbf{Average} & \textbf{T = 0.001} & \textbf{T = 0.5} & \textbf{T = 1.0} & \textbf{Average} \\
\hline
SFT                                 & 46.6               & 44.9             & 39.8             & 43.8             & 59.1               & 56.2             & 56.8             & 57.4             \\
DPO \cite{dpo}                                & 66.5               & 67.0             & 69.5             & 67.7             & 73.5               & 72               & 69.5             & 71.7             \\
KTO   \cite{ethayarajh2024kto}                              & 71.8               & 71.9             & 70.6             & 71.4             & 72.8               & 72.9             & 68.8             & 71.5             \\\hline
JPO (Ours)                               & \textbf{72.7}               & \textbf{71.9}            & \textbf{74.2 }            & \textbf{72.9}             & \textbf{76.3}               & \textbf{74.5}             & \textbf{74.1}             & \textbf{75.0}    \\
\hline
\end{tabular}%
}
\caption{\small{Results for aligning LLMs with the \name preference optimization objective. We compare the win-rate against the gold responses of the supervised finetuned (SFT), DPO-aligned and \name-aligned LLM on the (a) TL;DR summarization and (b) the Anthropic-Helpful datasets. In our experiments, we utilize ChatGPT to compare the model responses with the gold responses. We generate model responses for three sampling temperatures. The results are averaged over three runs of the preference optimization objectives.}}
\label{tab:main_results}
\end{table*}

\section{Qualitative Examples}
\label{sec:append_qualitative}

In this section, we present the qualitative examples to study the interplay between the conditional rankings and the joint preference over instruction-response pairs. Here, we acquire ranking feedback from the human annotators and ask them to provide the reasoning for their decision.

\subsection{Anthropic-Helpful Examples}
\label{sec:anthropic_helpful_qe}

We present the qualitative examples for the preferences acquired for the Anthropic-helpful dataset in Figure \ref{fig:cc_anthropic_helpful}, and \ref{fig:cr_anthropic_helpful}. We present our observations in the figure captions.


\begin{figure*}[h]
    \centering
    \includegraphics[scale=0.5]{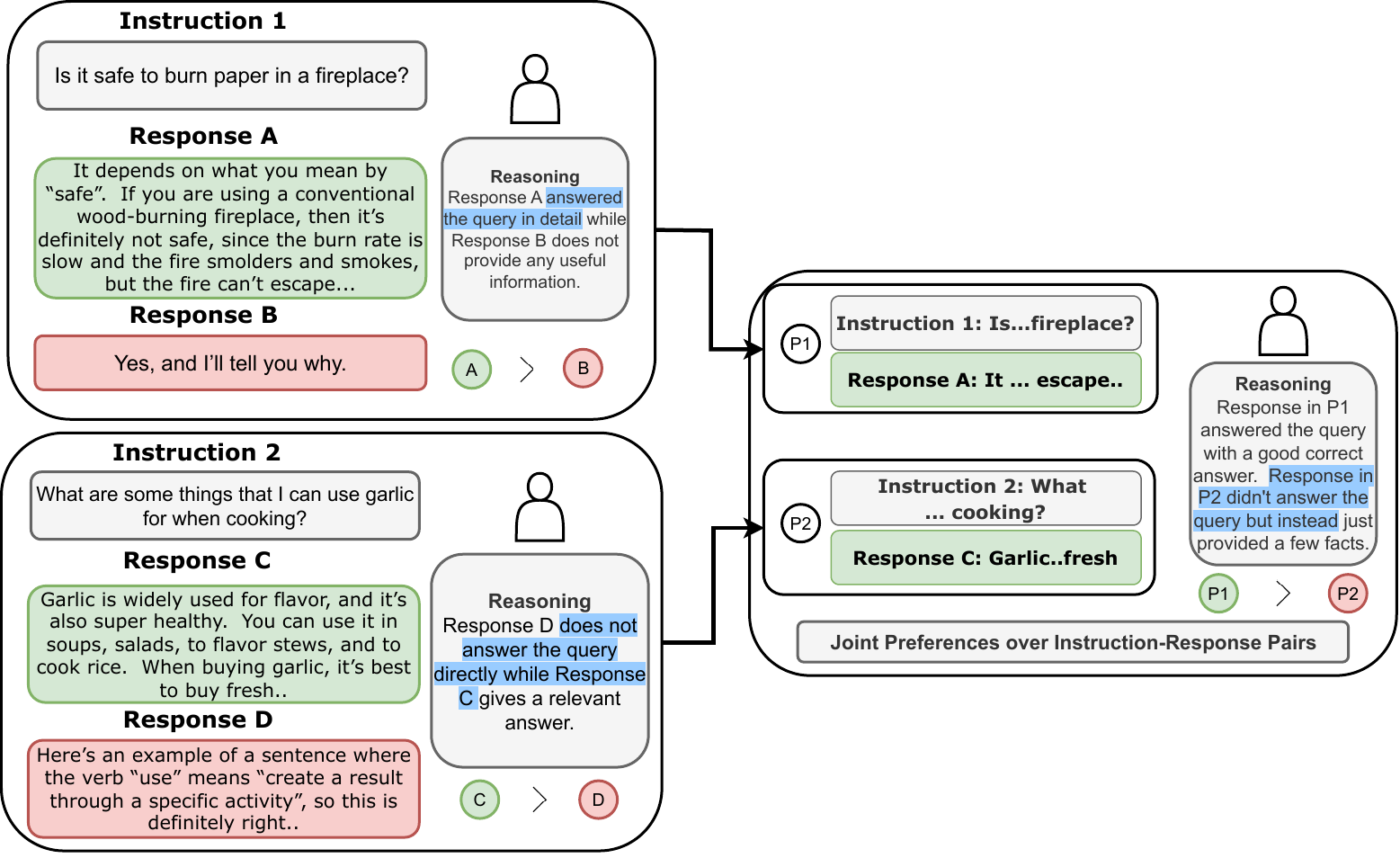}
    \caption{Interplay between the conditional rankings and joint rankings and reasoning acquired from the human annotators for the Anthropic-Helpful dataset. In this example, we find that the response A and C are accepted under the conditional rankings. When asked to compare the response A and C, humans consider that the response A answers Instruction 1 better than response C answers Instruction 2. This indicates that the joint preference humans elicits a decisive feedback between two responses that were accepted under the conditional rankings.}
    \label{fig:cc_anthropic_helpful}
\end{figure*}

\begin{figure*}[h]
    \centering
    \includegraphics[scale=0.5]{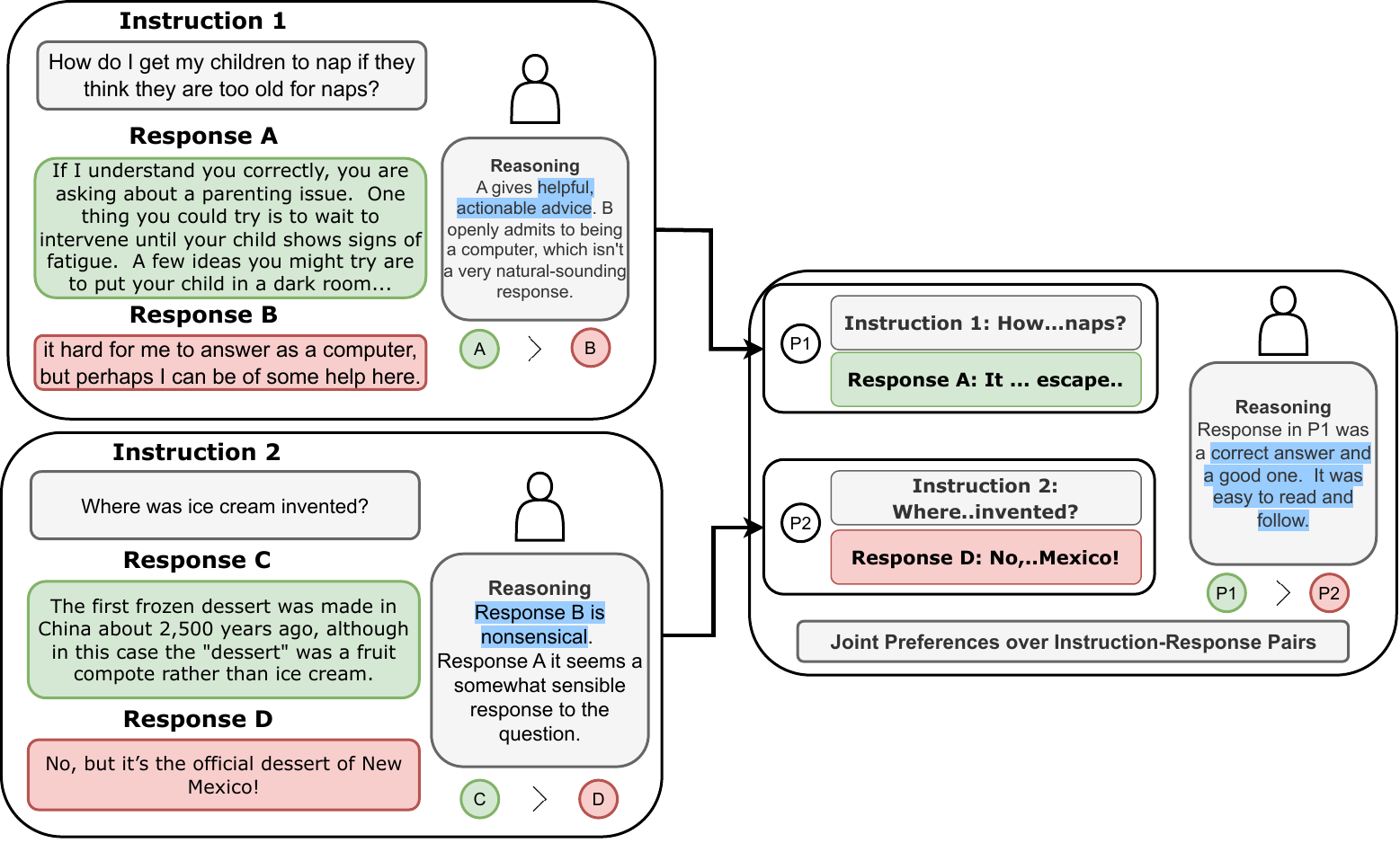}
    \caption{Interplay between the conditional rankings and joint rankings and reasoning acquired from the human annotators for the Anthropic-Helpful dataset. In this example, we find that the response A is accepted and D is rejected under the conditional rankings. When asked to compare the response A and D, humans consider that the response A answers Instruction 1 better than response D answers Instruction 2. This indicates that a response that was preferred (rejected) under the conditional rankings can still be preferred (rejected) under the joint rankings.}
    \label{fig:cr_anthropic_helpful}
\end{figure*}

\subsection{TL;DR Summarization Examples}
\label{sec:tldr_qe}

We present the qualitative examples for the preferences acquired for the TL;DR summarization dataset in Figure \ref{fig:rc_tldr}, \ref{fig:cc_tldr}, and \ref{fig:er_tldr}. We present our observations in the figure captions.

\begin{figure*}[h]
    \centering
    \includegraphics[scale=0.5]{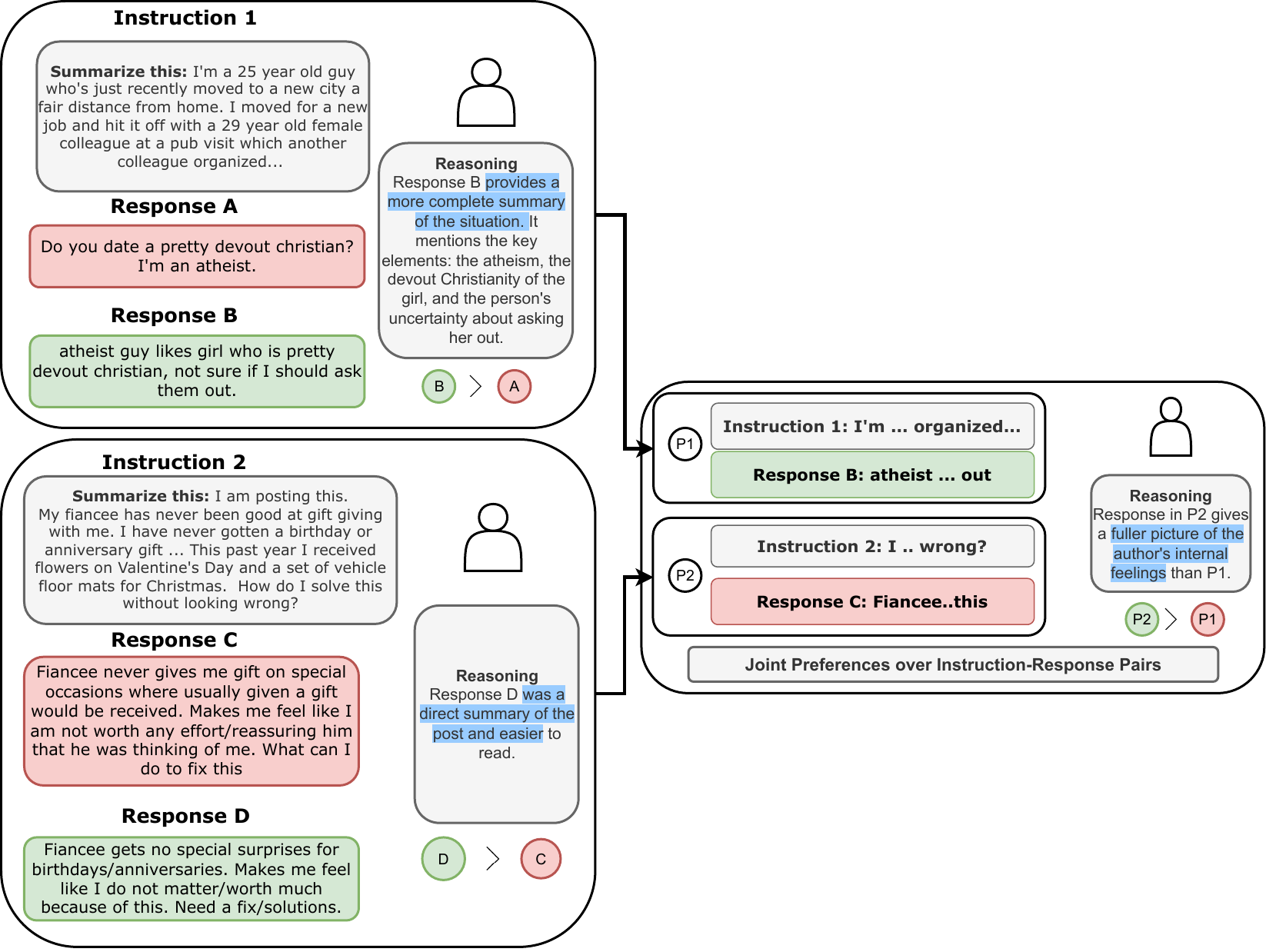}
    \caption{Interplay between the conditional rankings and joint rankings and reasoning acquired from the human annotators for the TL;DR summarization dataset. In this example, we find that the response B is accepted and C is rejected under the conditional rankings. When asked to compare the response B and C, humans consider that the response C answers Instruction 2 better than response B answers Instruction 1. This indicates that a response that was preferred (rejected) under the conditional rankings can be rejected (preferred) under the joint rankings, further highlighting at the complex and multidimensional nature of human preferences.}
    \label{fig:rc_tldr}
\end{figure*}

\begin{figure*}[h]
    \centering
    \includegraphics[scale=0.5]{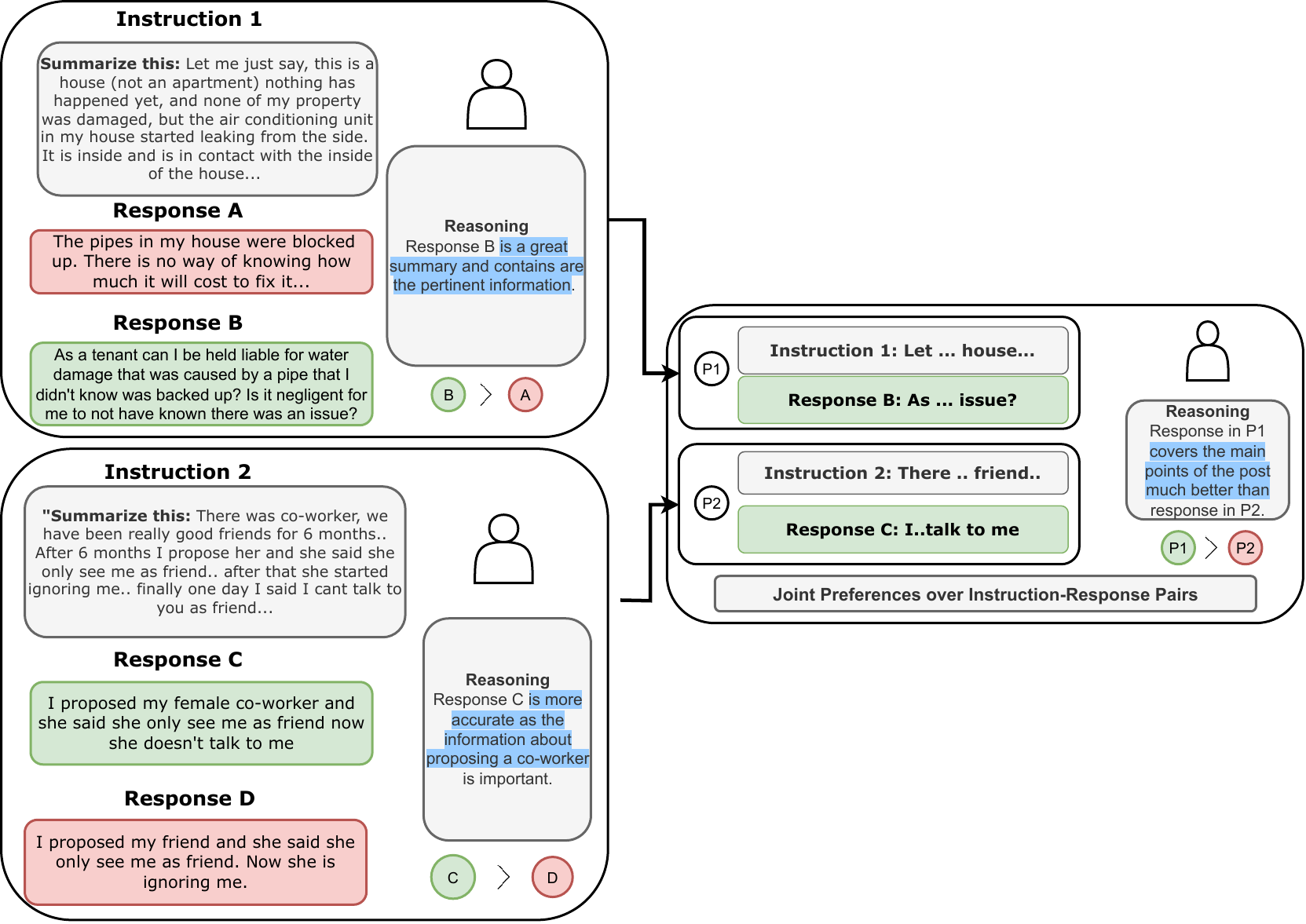}
    \caption{Interplay between the conditional rankings and joint rankings and reasoning acquired from the human annotators for the TL;DR summarization dataset. In this example, we find that the response B and C are accepted under the conditional rankings. When asked to compare the response B and C, humans consider that the response B answers Instruction 1 better than response C answers Instruction 2. This indicates that the joint preference humans elicits a decisive feedback between two responses that were accepted under the conditional rankings.}
    \label{fig:cc_tldr}
\end{figure*}

\begin{figure*}[h]
    \centering
    \includegraphics[scale=0.5]{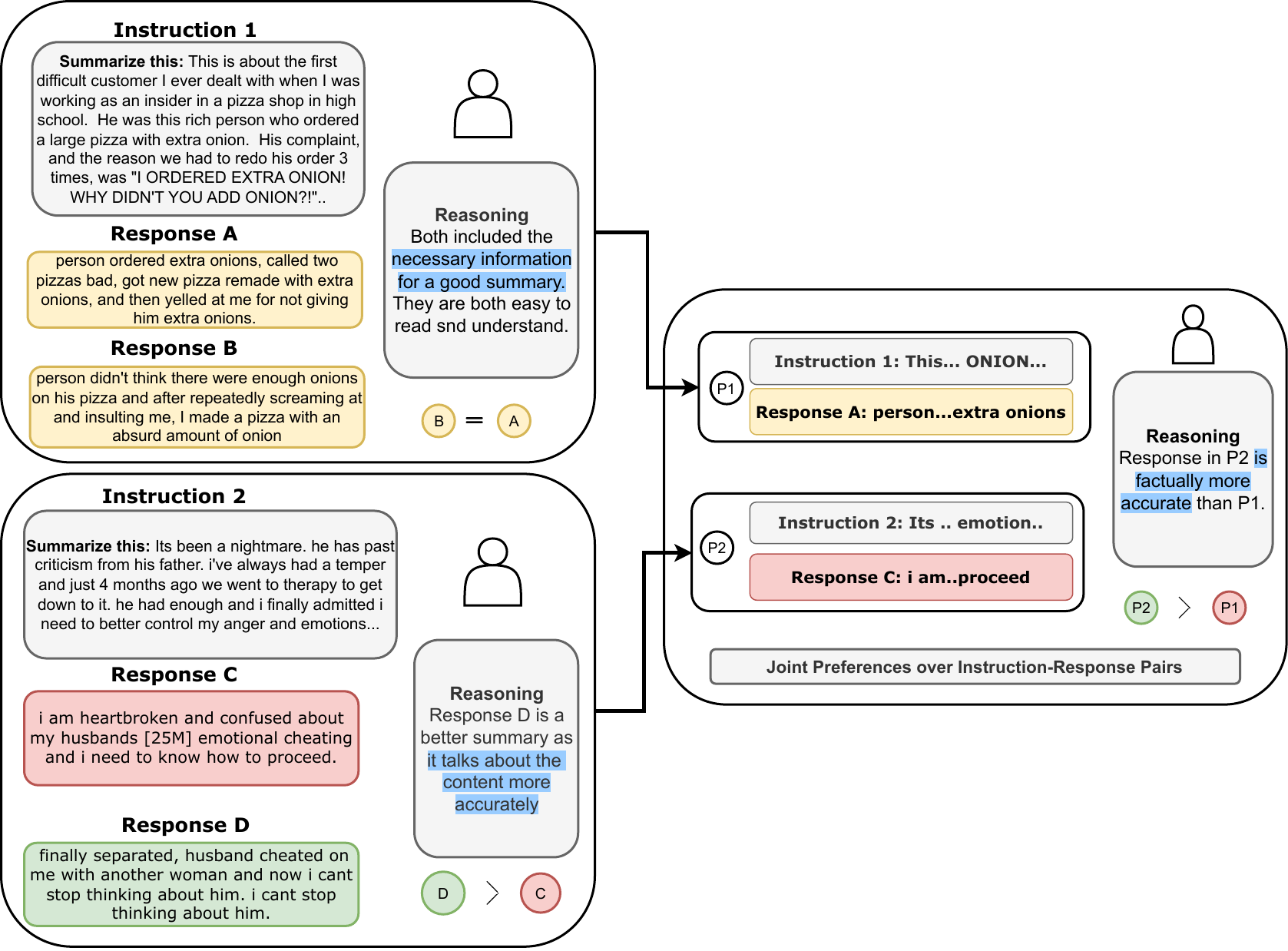}
    \caption{Interplay between the conditional rankings and joint rankings and reasoning acquired from the human annotators for the TL;DR summarization dataset. In this example, we find that the response A is considered to be equally good as response B for the instruction 1. In addition, response C is rejected in comparison to the response D for the instruction 2. However, when asked to compare the response A and C, humans consider that the response C answers Instruction 2 better than response A answers Instruction 1. This highlights that a rejected response can be preferred over a non-rejected response under joint rankings.}
    \label{fig:er_tldr}
\end{figure*}

\section{Alignment Training Details}
\label{sec:alignment_training}

\subsection{Supervised Finetuning Details}
\label{sec:app_sft}
We present the SFT details in table ~\ref{tab:sft_params}. We perform full-finetuning of Mistral-7B using the source code from \url{https://github.com/abacaj/fine-tune-mistral}.

\begin{table*}[h]
\centering
\begin{tabular}{cc}
\hline

\textbf{Anthropic-Helpful Dataset}                                          &      \\\hline
Learning Rate  & 1.5e-6 \\
Batch Size & 6  \\
Epochs & 3 \\
\hline    \\  \\ 
\hline
\textbf{OpenAI TL;DR Summarization Dataset}                                                   &       \\\hline
Learning Rate  & 2e-5 \\
Batch Size & 12  \\
Epochs & 3 \\
\hline 
\end{tabular}
\caption{Training details for the supervised finetuning of Mistral-7B.}
\label{tab:sft_params}
\end{table*}

\subsection{\name}
\label{sec:app_dpo_dove}

We present the training details for \name preference optimization objective in the Table \ref{tab:details_dove}. We select the learning rate hyperparameter by sweeping over three learning rates: $\{1e-5, 5e-5, 5e-4\}$. We utilize the TRL library \cite{vonwerra2022trl} for the \dpo source code.

\begin{table*}[h]
\centering
\begin{tabular}{cc}
\hline
\textbf{OpenAI TL;DR Summarization Dataset}                                          &      \\\hline
Peak Learning Rate  & 5e-5 \\
Optimizer & AdamW \citep{loshchilov2017decoupled} \\
Learning Schedule & Cosine \\
Batch Size & 32  \\
Epochs & 10 \\
Warmup Steps & 100\\
$\alpha$ (LoRA) & 16\\
Dropout (LoRA) & 0.05\\
Bottleneck $r$ (LoRA) & 8 \\
4bit Loading & True \\
$\beta$ & 0.1 \\
\hline    \\  \\ 
\hline
\textbf{Anthropic-Helpful Dataset}                                                   &       \\\hline
Peak Learning Rate  & 5e-5 \\
Optimizer & AdamW \\
Learning Schedule & Cosine \\
Batch Size & 32  \\
Epochs & 5 \\
Warmup Steps & 100\\
$\alpha$ (LoRA) & 16\\
Dropout (LoRA) & 0.05\\
Bottleneck $r$ (LoRA) & 8 \\
4bit Loading & True \\
$\beta$ & 0.1 \\
\hline    
\end{tabular}
\caption{Training details for \name preference optimization objective. We use the identical settings for \dpo.}
\label{tab:details_dove}
\end{table*}

\section{ChatGPT Prompts}
\label{sec:gpt_prompts}
We present the ChatGPT for acquiring conditional rankings feedback and joint preferences over instruction-response pairs in Table \ref{tab:gpt_identical} and Table \ref{tab:gpt_nonidentical}, respectively. 

\begin{figure*}
\centering
\resizebox{\linewidth}{!}{

\begin{tabular}{p{1.3\linewidth}}

\toprule

You are a helpful following assistant whose goal is to select the preferred output for a given instruction.
Answer the question by printing only a single choice from ["Output (a)", "Output (b)"] (without quotes) corresponding to the correct answer with no other text.
\\\\

\#\#\#\# Annotation Guideline\\\\

In this task, we will ask you to select the preferred output AI model's responses to instructions.\\

You will read examples, which are composed of the following:\\

1. an Instruction we give to the AI system\\
2. Output (a), the first output from the AI system\\
3. Output (b), the second output from the AI system\\

Your task is to decide which response is better for each example. There are several dimensions that you can think along. Consider the following dimensions:\\\\

Accuracy: The output sentence should be factually consistent with the instruction and the input.
Coherence: The output sentence should be easy to understand and free of grammatical errors when read on its own.
Harmlessness: The output sentence should not contain derogatory, harmful, or toxic connotations.\\\\

You should answer using only Output (a) or Output (b) depending on which response is better.\\\\

\#\#\# Instruction:\\
${instruction}$\\\\

\#\#\# Output (a):\\
${output_1}$\\\\

\#\#\# Output (b):\\
${output_2}$\\\\

\#\# Preferred Output:\\
\bottomrule
\end{tabular}}
\caption{GPT-3.5-Turbo API prompt for comparisons on identical instructions}
\label{tab:gpt_identical}
\end{figure*}

\begin{figure*}
\centering
\resizebox{\linewidth}{!}{

\begin{tabular}{p{1.3\linewidth}}

\toprule

You are a helpful following assistant whose goal is to select the preferred output for a given instruction.
Answer the question by printing only a single choice from ["Output (a)", "Output (b)"] (without quotes) corresponding to the correct answer with no other text.
\\\\

\#\#\#\# Annotation Guideline\\\\

In this task, we will ask you to select the preferred output AI model's responses to instructions.\\

You will read examples, which are composed of the following:\\

1. Instruction (a), the first input to the AI system and Output (a), the first output from the AI system\\
2. Instruction (b), the second input to the AI system and Output (b), the second output from the AI system\\\\

Your task is to decide which response is better for each example. There are several dimensions that you can think along. Consider the following dimensions:\\\\

Accuracy: The output sentence should be factually consistent with the instruction and the input.
Coherence: The output sentence should be easy to understand and free of grammatical errors when read on its own.
Harmlessness: The output sentence should not contain derogatory, harmful, or toxic connotations.\\\\

You should answer using only Output (a) or Output (b) depending on which response is better.\\\\

\#\#\# Instruction (a):\\
${instruction_1}$\\\\

\#\#\# Output (a):\\
${output_1}$\\

\#\#\# Instruction (b):\\
${instruction_2}$\\\\

\#\#\# Output (b):\\
${output_2}$\\\\

\#\# Preferred Output:\\
\bottomrule
\end{tabular}}
\caption{{GPT-3.5-Turbo API prompt for comparisons on non-identical instructions}}
\label{tab:gpt_nonidentical}
\end{figure*}


\section{Human Annotation Platform}
\label{sec:human_annotation}

For human evaluation, we recruit annotators from Amazon Mechanical Turk, and all annotators are fairly paid more than \$18
USD per hour (it varies depending on the time
spent on HITs), which is higher than the national
minimum wage where the annotators are recruited. We present the screenshots for the human interface in the Figure \ref{fig:conditional_human_eval} (conditional rankings) and Figure \ref{fig:joint_human_eval} (joint ranking preferences over instruction-response pairs).

\begin{figure*}[h]
    \centering
    \includegraphics[scale=0.2]{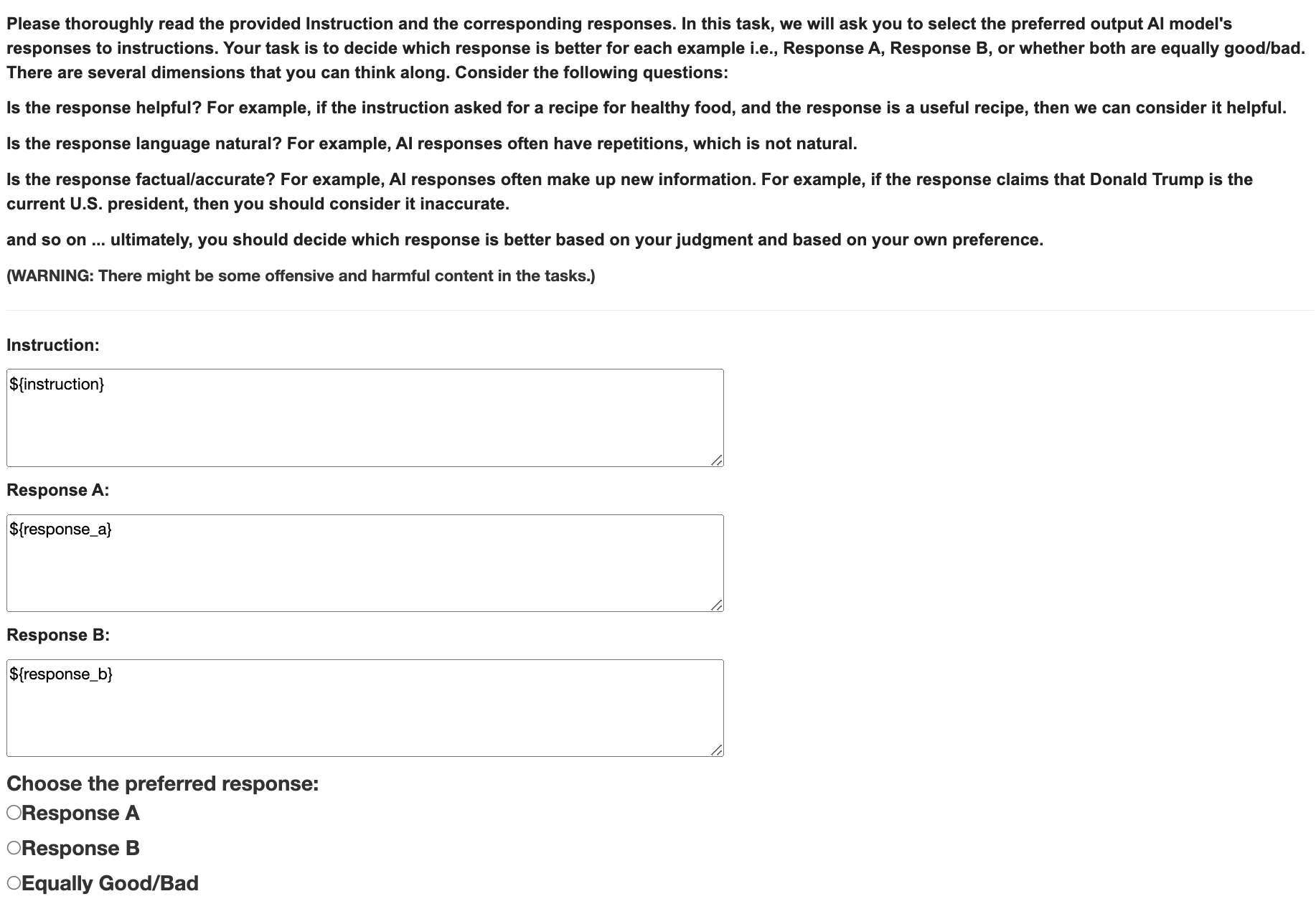}
    \caption{Human annotation interface for Conditional Rankings}
    \label{fig:conditional_human_eval}
\end{figure*}

\begin{figure*}[h]
    \centering
    \includegraphics[scale=0.4]{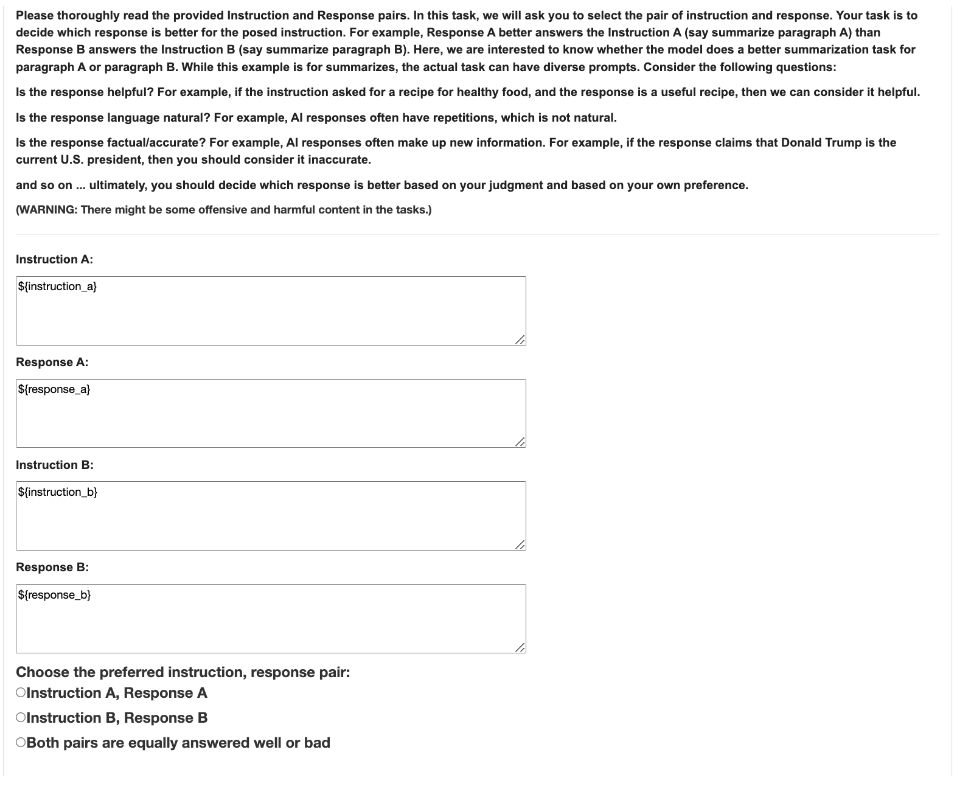}
    \caption{Human annotation interface for joint preferences over instruction-response pairs.}
    \label{fig:joint_human_eval}
\end{figure*}

\end{document}